\documentclass[letterpaper]{article} 
\usepackage{aaai24}  
\usepackage{times}  
\usepackage{helvet}  
\usepackage{courier}  
\usepackage[hyphens]{url}  
\usepackage{graphicx} 
\urlstyle{rm} 
\usepackage{natbib}  
\usepackage{caption} 
\frenchspacing  
\setlength{\pdfpagewidth}{8.5in}  
\setlength{\pdfpageheight}{11in}  
%
\usepackage{algorithm}
\usepackage{algorithmic}

%
\usepackage{newfloat}
\usepackage{listings}
\DeclareCaptionStyle{ruled}{labelfont=normalfont,labelsep=colon,strut=off} 
\lstset{%
	basicstyle={\footnotesize\ttfamily},
	numbers=left,numberstyle=\footnotesize,xleftmargin=2em,
	aboveskip=0pt,belowskip=0pt,%
	showstringspaces=false,tabsize=2,breaklines=true}
\floatstyle{ruled}
\newfloat{listing}{tb}{lst}{}
\floatname{listing}{Listing}
%
\pdfinfo{
/TemplateVersion (2024.1)
}

\usepackage{xcolor}
\usepackage{amsmath}
\usepackage{amsfonts}
\usepackage{amsthm}
\usepackage{multirow}
\usepackage{subcaption}
\usepackage{lipsum}

 \nocopyright 

\setcounter{secnumdepth}{2} 

%



\makeatletter
\def\blfootnote{\xdef\@thefnmark{}\@footnotetext}
\makeatother

\title{Epsilon*: Privacy Metric for Machine Learning Models}
\author {
   Diana M.  Negoescu\textsuperscript{\rm 1},
   Humberto Gonzalez\textsuperscript{\rm 1},
   Saad Eddin Al Orjany\textsuperscript{\rm 1},
   Jilei Yang\textsuperscript{\rm 1},  \\
   Yuliia Lut\textsuperscript{\rm 1},
   Rahul Tandra\textsuperscript{\rm 1},
   Xiaowen Zhang\textsuperscript{\rm 1},
   Xinyi Zheng\textsuperscript{\rm 1},
   Zach Douglas\textsuperscript{\rm 1},
   Vidita Nolkha \textsuperscript{\rm 1},
   Parvez Ahammad\textsuperscript{\rm 1}, 
   Gennady Samorodnitsky\textsuperscript{\rm 1, \rm 2}  
}
\affiliations {
    \textsuperscript{\rm 1}LinkedIn Corporation\\
    \textsuperscript{\rm 2}Cornell University\\
}


\newtheorem{thm}{Theorem}
\newtheorem{definition}{Definition}

\newcommand{\R}{\mathbb{R}}
\newcommand{\calZ}{\mathcal{Z}}

\begin{document}

\maketitle

\begin{abstract}
We introduce Epsilon*, a new privacy metric for measuring the privacy
risk of a single model instance prior to,  during, or after deployment of privacy mitigation strategies.  {\color{black}The metric requires only black-box access to model predictions,  does not require training data re-sampling or model re-training, and can be used to measure the privacy risk of models not trained with differential privacy}.  {\color{black} Epsilon* is a function of true
  positive and false positive rates in a hypothesis test used by an
  adversary in a membership inference attack.  We distinguish between quantifying
  the privacy loss of a trained model instance, {\color{black} which we refer to as \textit{empirical privacy},}  and quantifying the privacy loss of the
  training mechanism which produces this model instance.  Existing approaches in the privacy auditing literature provide lower bounds for the latter,  while our metric provides an empirical lower bound for the former by relying on an  ($\epsilon,
  \delta$)-type of quantification of the privacy of the trained
  model instance.  We establish a relationship between these
  lower bounds} and show how to implement Epsilon* to avoid numerical and noise amplification instability.  We further show in experiments on benchmark public data sets that Epsilon* is sensitive to privacy risk mitigation by training with {\color{black} differential privacy} (DP), where the value of Epsilon* is reduced by up to 800\% compared to {\color{black}the Epsilon* values of} non-DP trained baseline models.   This metric allows {\color{black} privacy auditors to be independent of model owners,  and enables} visualizing the privacy-utility landscape to make informed decisions regarding the trade-offs between model privacy and utility.\blfootnote{Accepted at \textit{PPAI-24: The 5th AAAI Workshop on Privacy-Preserving Artificial Intelligence},  Vancouver,  Canada,  2024.}
\end{abstract}

\section{Introduction}
\label{sec:intro}
Many machine learning (ML) models are trained using either private customer data (age, gender, but also  quasi-identifiers such as geographical location), or using third-party information.  In order to offer data-driven privacy protections {\color{black} in the context of} increasing demand for privacy protection,  model owners need metrics that help quantify the privacy risk in their {\color{black} artificial intelligence} models,  in addition to privacy risk mitigation strategies which reduce the risk once the metrics identify it as high. 

In this paper,  {\color{black} we start with an  ($\epsilon,
  \delta$)-type of quantification of the privacy of a trained model instance, {\color{black}which we call \textit{empirical privacy}, }and derive a lower bound for the 
  $\epsilon$ in this quantification} 
 inspired by the hypothesis test characterization of differential privacy (DP) in \cite{kairouz2015composition}. We use this lower bound,  which we refer to as Epsilon*,  as a privacy metric in and of itself,  as it has several desirable properties.  Specifically,  Epsilon*:
\begin{itemize}
\item Does not interfere with the training pipeline,  i.e.,  it does not require model re-training or re-sampling of the training set. While obtaining statistically valid lower bounds on the DP parameter $\epsilon$ requires model re-training,  or,  at a minimum,  re-sampling the training set to introduce canaries \cite{stenke2023onerun}, obtaining a valid lower bound on model privacy does not require either.  This is important, as privacy auditors might not have access to {\color{black}altering} the input or training pipeline of models,  and deployed  models in the industry are often very large and require many hours {\color{black} or days} to train,  rendering their re-training prohibitive.
\item {\color{black}Enables model owners to prioritize which training pipelines to be mitigated with DP.  Epsilon* can be computed on any ML model instance,  including on model instances from training mechanisms where DP hasn't been deployed yet.} DP is the gold standard privacy mitigation strategy,   but comes with well-documented computational and utility drawbacks, {\color{black}e.g. }\cite{denison2022private, ponomareva2023dp}.  {\color{black} We make a distinction between \textit{mitigating} and \textit{measuring} privacy risk of models: in particular, training with DP is a mitigation strategy, while Epsilon* provides a measurement of privacy loss on a given model instance.  Even though training with DP ensures model-level privacy and its $\epsilon$ parameter in DP is a measurement of worst-case privacy loss,  }it is not trivial for auditors agnostic of the training mechanism to determine whether or not DP was used in training and if so,  with what $\epsilon$ parameter,  as recent advances in the area of auditing DP \cite{stenke2023onerun, nasr2023tight, zanella2022bayesian} assume knowledge of whether or not DP was used,  in addition to access to the training pipeline for re-sampling or model re-training.  
\item Can quantify privacy at all stages of model development and deployment, including when privacy mitigation strategies do not involve DP.  For instance,  model owners may investigate the effects on privacy when iterating through model development stages of feature addition or removal, or of training the model for more or fewer epochs.
\end{itemize}

In this paper,  we introduce Epsilon* and show how to derive it from the true positive (TPR) and false positive rates (FPR) of a hypothesis test of a simulated membership inference attack (MIA) on population data (MIA-PD) \cite{shokri2017membership, yeom2018privacy}, where an attacker attempts to predict whether a given data point was used for model training based on the training and non-training (population) loss distributions from a given model instance.  We note that the hypothesis test formulation is not restrictive to this particular attack,  and in fact Epsilon* could be derived from any other attacks that can be formulated as hypothesis tests on neighboring data sets,  such as attribute inference attacks (AIAs) \cite{yeom2018privacy}.  We chose MIA-PD as it is a canonical privacy attack,  requiring minimum (black box) model access by the attacker,  and is the foundation of other attacks such as AIAs \cite{Carlini2022membership}.

We map challenges in the implementation of Epsilon*, coming from the fact that often the most impactful TPR and FPR realizations are those close to 0 and 1,  where sampling noise and numerical floating point errors are prone to occur. We show that we can avoid these pitfalls by working with parametric distributions fitted to a transformation of the loss data, where an analytical formulation to Epsilon* is also available.

{\color{black}Our main contribution is on the empirical side,  where we compute Epsilon* for more than 500 model instances trained on public data sets (UCI Adult, Purchase-100),  with varying privacy mitigation strategies (without DP  and with DP for various DP $\epsilon$ values), and a range of model hyperparameter configurations. Our experiments show that Epsilon* is sensitive to privacy mitigation with DP,  and its value was below that of the $\epsilon$ used when training with DP in all public data set experiments.  On the theoretical side,  we prove that a statistical relationship between Epsilon* and a lower bound on the privacy loss parameter $\epsilon$ of the training mechanism exists. }

\section{Background and Related Work}
\label{sec:related_work}

{\color{black} The standard definition of differential privacy of a randomized
mechanism $M$ taking as an argument a database $D$ and outputting a
value in some set $\calZ$ \cite{dwork2006calibrating}, is
\begin{equation} \label{e:DP.dwork}
  Pr[M(D_0)\in R]\leq e^\epsilon Pr[M(D_1)\in R] +\delta
\end{equation}
for all pairs of neighboring databases $D_0$ and $D_1$ that
differ in the record of a single individual and 
for every (measurable) subset $R\subset\calZ$. A mechanism with this
property is called $(\epsilon,\delta)$-DP. 

A  powerful hypothesis test characterization of DP was introduced by
\cite{kairouz2015composition},  {\color{black} and later expanded by \cite{dong2022gaussian} in {\color{black} defining} Gaussian differential privacy}. Here, one views a set $R$ as the
rejection (critical) region of the statistical test
\begin{align*}
H_0:&\ \text{the mechanism was run on the database } D_0,\\
H_1:&\ \text{the mechanism was run on the database } D_1
\end{align*}
for some fixed neighboring databases $D_0$ and $D_1$.}
\begin{thm}[\cite{kairouz2015composition}]\label{Kairouzhm}
A mechanism $M$ is $(\epsilon,\delta)$ - DP if and only if the following conditions are satisfied for all pairs of neighboring databases $D_0$ and $D_1$, and all rejection regions $R$:
\begin{align*}
Pr[M(D_0) \in R] + e^{\epsilon} Pr[M(D_1) \in \bar{R}]  &\geq 1- \delta \\
Pr[M(D_1) \in \bar{R}] + e^{\epsilon} Pr[M(D_0) \in R]  &\geq 1-   \delta,
\end{align*}
\end{thm}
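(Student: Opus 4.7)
My plan is to prove the two directions of this equivalence separately, leveraging only (i) complementation of the rejection region $R$, (ii) the symmetry of the neighbor quantifier, and (iii) the universality of $R$ over all measurable subsets of $\calZ$. The result is essentially an algebraic reshuffling of \eqref{e:DP.dwork} into a pair of sum-form inequalities, one for each ordering of the pair $(D_0,D_1)$, so no hard analytic input is needed.

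For the forward direction, I would assume $M$ is $(\epsilon,\delta)$-DP and derive each claimed inequality in turn. The first inequality would come from applying \eqref{e:DP.dwork} with $R$ replaced by $\bar R$, giving $Pr[M(D_0)\in \bar R]\le e^\epsilon Pr[M(D_1)\in \bar R]+\delta$; substituting $Pr[M(D_0)\in \bar R]=1-Pr[M(D_0)\in R]$ and rearranging yields the claim. The second inequality would be obtained by first invoking \eqref{e:DP.dwork} with the roles of $D_0$ and $D_1$ swapped (permitted because the DP condition quantifies over all neighboring pairs), producing $Pr[M(D_1)\in R]\le e^\epsilon Pr[M(D_0)\in R]+\delta$, and then rewriting its left-hand side as $1-Pr[M(D_1)\in \bar R]$.

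For the reverse direction, suppose both displayed inequalities hold for every measurable $R$. From the first inequality, the substitution $Pr[M(D_0)\in R]=1-Pr[M(D_0)\in \bar R]$ yields $Pr[M(D_0)\in \bar R]\le e^\epsilon Pr[M(D_1)\in \bar R]+\delta$, and since the universal quantifier over $R$ is preserved under the involution $R\mapsto \bar R$, this is exactly \eqref{e:DP.dwork} for the ordered pair $(D_0,D_1)$. The same maneuver applied to the second inequality (which already exchanges $D_0$ and $D_1$) recovers \eqref{e:DP.dwork} for the ordered pair $(D_1,D_0)$. Taking both conclusions over all neighboring pairs is precisely $(\epsilon,\delta)$-DP.

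The main obstacle, modest as it is, is purely bookkeeping: one must track carefully that the two inequalities of the theorem are not redundant (each controls a different ordering of the database pair) and verify that the ``for all $R$'' quantifier is preserved under complementation, so that the sum-form and the product-form statements are logically equivalent. Beyond finite additivity of $Pr$ and the symmetry of the neighbor relation, no measure-theoretic subtlety, convexity argument, or sharpening inequality is required.
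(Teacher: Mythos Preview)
Your proposal is correct. The paper does not give a self-contained proof of Theorem~\ref{Kairouzhm}---it is cited from \cite{kairouz2015composition}---but when proving the model-level analog (Theorem~\ref{Kairouzhm_for_model}) the paper remarks that both results ``follow easily'' from the corresponding definitions by replacing $R$ with $\bar R$, which is exactly the complementation-and-symmetry bookkeeping you carry out.
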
{\color{black}where $\bar{R}$,  the acceptance region, is the complement of $R$.} 
Here,  $Pr[M(D_0) \in R]$ is the false negative rate (FNR, {\color{black} Type I error)}
and $Pr[M(D_1) \in \bar{R}]$ is the false positive rate (FPR, {\color{black} Type II error)} {\color{black}for the hypothesis test above.} 

MIAs naturally define a hypothesis test for whether a target entry belongs to
the training set or not \cite{shokri2017membership,  yeom2018privacy,
  Carlini2022membership}. However, it is important to realize that
the test is based on the loss of the trained model on a query point,
not the change in the model weights resulting from adding or removing
the query point from the training data set. 
 In MIA-PD,  attackers usually choose a threshold $\tau$,
and classify any point whose model loss  is above $\tau$ as
non-training: the test statistic is then the model loss and the
corresponding rejection region $R$ for this hypothesis test is the interval $[\tau,  \infty)$ \cite{yeom2018privacy,  salem2018ml, ye2022enhanced}. 

Most advances in the growing area of auditing differential privacy
have also been built upon hypothesis testing framework inspired by 
Theorem \ref{Kairouzhm}: \cite{ding2018detecting,  bichsel2018dp,
  jagielski2020auditing,  Nasr2021lowerbounds,  tramer2022debugging,
  zanella2022bayesian}. Indeed, estimating a pair of (FNR, FPR) in the
hypothesis test of  Theorem \ref{Kairouzhm} provides a lower bound on
the $\epsilon$ parameter of the mechanism $M$, which can be used  to
detect mistakes in the implementation of the mechanisms leading to
higher privacy loss values than intended.  However, if $M$ is a ML
mechanism, then 
multiple runs of re-sampling and re-training the models are neccessary for the statistical esimation of the FNRs and FPRs,  ranging from the order of thousands when Clopper-Pearson methods are used in statistical estimation \cite{Nasr2021lowerbounds,  tramer2022debugging} to about 500 when Bayesian methods are employed \cite{zanella2022bayesian}.

Several suggestions on evaluating the privacy of training mechanisms while reducing the number of times models need to be re-trained in order to estimate the lower bound on the $\epsilon$  have been made recently \cite{lu2022general,  stenke2023onerun}, in particular in the area of federated learning \cite{andrew2023one,  maddock2022canife,  nasr2023tight}.  Several of these methods now only require one  \cite{stenke2023onerun}
 or two models \cite{ nasr2023tight} to be re-trained, but all of
 them still require access to the sampling of the data sets used for
 model training for introducing canaries \cite{stenke2023onerun,
   maddock2022canife},  and/or visibility into model updates during
 training \cite{nasr2023tight}. As we pointed out above, measuring
 privacy in the ML context is difficult,  and not all methods in the literature address the same type of resilience against an attack.

In {\color{black} this paper}, we propose an estimator for the lower bound on the {\color{black} privacy risk} against
MIAs of a single model instance rather than on the privacy {\color{black} risk} of the
{\color{black}training mechanism that generates the model instances}.  This enables privacy auditors to be independent of model owners when evaluating the privacy loss of a model,  paving the way for transparent and easy to implement measurements of privacy {\color{black} risk}. We advocate for using this estimator as a privacy metric in and of itself,  as it shows sensitivity to privacy mitigation strategies such as DP when evaluated on deep neural net model instances trained on public data sets.

%
%
%
\section{Methods}
\label{sec:methods}

{\color{black} We adapt the definition of differential privacy in
  \eqref{e:DP.dwork} for ML applications where we are
  interested in measuring the resilience of a trained ML model instance
  against an adversary performing a membership inference attack on population data (MIA-PD). } Let ${\color{black}\mathcal{X} \times \mathcal{Y}}$ be the set of all possible observations $(x, y)$ a ML training mechanism $M$ may use for training,  testing and/or validation, where $x$ is usually a feature vector {\color{black} in $\mathbb{R}^d$} and $y$ is the label.  {\color{black} Let $\mathcal{P}$ be a distribution over  $\mathcal{X} \times \mathcal{Y}$.} The training mechanism takes as input a training data set ${\color{black}D =} \{ (x_1, y_1), ..., (x_n, y_n)\} \in ({\color{black}\mathcal{X} \times \mathcal{Y}})^n$ {\color{black} sampled according to  ${\color{black}\mathcal{P}}$} and outputs a trained model with weights $\theta = M(D) \in \Theta$, where $\Theta$ is the space of all possible weights,  {\color{black}i.e.,  $\theta$ is an observation of a (usually) random training mechanism $M$ applied to data set $D$}.  The prediction of the model at observation $x$ is then $f_{\theta}(x)${\color{black}: for example, in binary classification models,  $f_{\theta}(x) \in {\color{black}\mathcal{Y}} = [0,1]$ is typically the probability of observing label $y$ = 1,  where we can think of $y$ as taking values 0 or 1 in ${\color{black}\mathcal{Y}}$}. 

The MIA adversary formulates the hypothesis test:
\begin{align*}
H_0:& \text{ The training data set does not contain point }(x,y),\\
H_1:& \text{ }(x,y) \text{ belongs to the training data set.}
\end{align*}
{\color{black} The adversary has black-box access to the trained model
  and makes a decision based on evaluating $f_\theta(x)$ and comparing
  it with $y$.  {\color{black}Therefore, the adversary utilizes a rejection region
  that depends on $\theta$ only through the model output $f_{\theta}(x)$}. The adversary chooses a subset $R\subset {\color{black}\mathcal{Y}}\times {\color{black}\mathcal{Y}}$ and rejects $H_0$ (i.e. decides that
  the query point $(x,y)$ was, in fact, used for training) if
  $(f_\theta(x),y)\in R$. 
  
{\color{black} Consistent with the MIA literature  \cite{jayaraman2020revisiting},  we assume that the MIA attacker chooses $R$ by first considering a loss function {$l(f_{\theta}(x), y)$ }of the model with weights $\theta$ under input $(x,y)$.\footnote{We use $l(f_{\theta}(x), y) =
  (1-2y)(\log(f_{\theta}(x)) - \log(1-f_{\theta}(x)))$ for binary
  classification models and \\$l(f_{\theta}(x), y) =
  -\sum_j(\log(f_{\theta}(x)_j) -
  \log(1-f_{\theta}(x)_j))\mathbf{1}_{\{y_j = 1\}}$ for multi-class
  classification models, where $j$ is an index over the classes.}
The output $\theta$ of the model
generates a \textit{population}  loss distribution resulting from
sampling a point { from ${\mathcal{P}}$} and evaluating the loss of the model on that point. 
 Similarly,  the output $\theta$ of the model generates a
 \textit{training data} loss distribution that results from sampling at random
 a point from the training data set $D$ and evaluating the loss of the
 model on that point.  The losses on the training data set
 tend to be smaller than the losses on non-training (population) data
 \cite{jayaraman2020revisiting},  therefore { the
   adversary selects a threshold $\tau$ and reject $H_0$
   (i.e. conclude that the query point $(x,y)$ was used for training)
   if the loss $l(f_{\theta}(x), y)\leq \tau$. This corresponds
   to setting}
 {\color{black}
      \begin{align} \label{R_threshold}
R=R(\theta) &= \{(y_1, y_2) \in {\color{black}\mathcal{Y}} \times {\color{black}\mathcal{Y}}: l(y_1, y_2)
                  \leq \tau \}.
      \end{align}     }
}
Our quantification of resilience of the trained model is given
below. It is inspired by the  $(\epsilon,\delta)$-{\color{black}parameterization}  of
differential privacy in \eqref{e:DP.dwork}, and allows us to derive Epsilon* as a lower bound for the $\epsilon$ in this quantification.}
{\color{black}
\begin{definition}[Empirical privacy]\label{model_privacy_def}
A model { instance parameterized by $\theta$,  and obtained from using a training mechanism on data set $D = \{(x_i, y_i), i = 1, ..., n\}  {  \in  (\mathcal{X} \times \mathcal{Y}})^n$},  is $(\epsilon,  \delta)$-empirically private if for any set $R $ of the form \eqref{R_threshold}
{
\begin{align} 
&\mathbb{P}_{(X, Y)\sim {\mathcal{P}}}\left[(f_{\theta}(X), Y) \in R\right]  \leq& \notag \\  &\phantom{aaaaaaaaa}e^{\epsilon} \mathbb{P}_{(X, Y) \sim D} \left[(f_{\theta}(X), Y) \in R\right] + \delta,   \label{def_eq1}\\
&\mathbb{P}_{(X, Y) \sim D} \left[(f_{\theta}(X), Y) \in R\right]    \leq& \notag \\ &\phantom{aaaaaaaaa}  e^{\epsilon}\mathbb{P}_{(X, Y)\sim {\mathcal{P}}}\left[(f_{\theta}(X), Y) \in R\right] + \delta,  \label{def_eq2}
\end{align}
}where  {
\begin{align*}
\mathbb{P}_{(X, Y) \sim D} \left[(f_{\theta}(X), Y) \in R\right]  =  \frac{1}{n}\sum_{i = 1}^n \mathbf{1}\left((f_{\theta}(x_i), y_i) \in R\right),
\end{align*}}
and $f_{\theta}(x)$ is the {prediction} of the trained model {\color{black} $\theta$} with input $x$.
\end{definition}
}
{\color{black} Notice that in our Definition \ref{model_privacy_def},  the randomness comes from sampling either from the training data set $D${\color{black},  or {\color{black}from the population according  to distribution $\mathcal{P}$}, } in contrast to the established DP definition (\ref{e:DP.dwork}), where the randomness is infused by the training mechanism $M$. {\color{black} On the other hand,  $\theta$ and $D$ in Definiton \ref{model_privacy_def} are fixed (not random), as they are simply the model instance and data set observed by the auditor.}}

{\color{black}

}

{\color{black} We parametrize $\tau$ as  $\tau =q_t(\theta)$, a
  quantile of the population loss distribution with value $\tau$.  Therefore, 
\begin{align}
\label{non_train_quantile}
&\mathbb{P}_{(X, Y)\sim {\color{black}\mathcal{P}}}\left[(f_{\theta}(X), Y) \in R\right]  =& \notag \\ &\phantom{aaaaaaaaa}   \mathbb{P}_{(X, Y) \sim {\color{black}\mathcal{P}}}\left[l(f_{\theta}(X), Y) \leq q_t(\theta) \right] =& t,
\end{align}
gives us the false positive rate (FPR) $t$ in the MIA hypothesis
test. We denote by $\eta_t$ the corresponding false negative rate (FNR):
\begin{align}
\label{train_quantile}
&\mathbb{P}_{(X, Y) \sim D} \left[(f_{\theta}(X), Y) \in \bar{R}\right] = &\notag \\ &\phantom{aaaaaaaaa}  \mathbb{P}_{(X, Y) \sim D}\left[l(f_{\theta}(X), Y) > q_t(\theta) \right] = \eta_t.
\end{align}

We vary the threshold $\tau$ by selecting multiple quantiles $t_1, ..., t_k$ in (\ref{non_train_quantile}) corresponding to multiple thresholds $\tau_1,..., \tau_k$, and constructing $k$ rejection regions $R_1(\theta), ..., R_k(\theta)$.  The false positive rate corresponding to rejection region $R_i(\theta)$ is $t_i, i = 1,..., k$, and the corresponding false negative rate we denote $\{\eta_i\}_{i = 1,...,k}$. 

{\color{black}
 In practice, we can estimate $t_1, ..., t_k$ in \eqref{non_train_quantile} from losses obtained from evaluating the model on a non-training set $\bar{D}$ sampled according to $\mathcal{P}$,  and their corresponding $\eta_i$ in \eqref{train_quantile} from losses obtained by evaluating the model on the training set $D$.  One way of estimating $(t, \eta_t)$ is by evaluating the empirical cumulative distribution functions (eCDFs) of the training and non-training losses.  In this case, even though we can select as many thresholds ($\tau_k$) as we want,  we can have at most $|D|$ unique values $\eta_t$.  An alternative,  and better way of estimating $(t, \eta_t)$ is to fit parametric distributions to losses on the population and training data and evaluate the cumulative distribution functions of those fitted distributions. This allows the false positive and false negative rates to be evaluated at a potentially infinite number of points $k$, each resulting in a potentially unique pair of $(t, \eta_t)$. We show in Section \ref{subsec:param} and Appendix B that parametric fitting to loss distributions can greatly improve the accuracy of estimating $(t, \eta_t)$.}

We are now ready to formally define $\epsilon^*$,  which we prove in Appendix A (Section \ref{subsec:appendix_lower_bound_for_def1}) to be a valid lower bound for the $\epsilon$ in Definition \ref{model_privacy_def}.

\begin{definition}[Epsilon*]\label{def_eps_star} Let $\delta$ be a small non-negative real number,  and $\{(t_i, \eta_i)\}_{i = 1,..., k}$ be the false positive and false negative rates, respectively,  corresponding to $k$ rejection regions $\{R_i(\theta)\}_{i = 1,..., k}$ of the form (\ref{R_threshold}),  where each $t_i$ and  $\eta_i$ is described by Equations (\ref{non_train_quantile}) and (\ref{train_quantile}) respectively.  We call the following quantity Epsilon*:
\begin{equation}
\begin{split}
\epsilon^* =& \log \Big[ \max_{i = 1, ..., k} \max \big(\frac{1-\delta - \eta_i}{t_i}, \frac{1-\delta-t_i}{\eta_i},  \\ & \phantom{aaaa} \frac{\eta_i-\delta}{1-t_i}, \frac{t_i-\delta}{1-\eta_i}, {\color{black}1}\big)\Big].
\label{eps_star_levels}
\end{split}
\end{equation}
\end{definition}
}
   {\color{black}The $\delta$ in Definition \ref{def_eps_star} should be chosen as the same $\delta$ in Definition \ref{model_privacy_def},  as $\epsilon^*$ is a lower bound for $\epsilon$ in Definition \ref{model_privacy_def} given the same $\delta$.  Just as in the original DP definition from \cite{dwork2006calibrating}, the $\delta$ in our Definition \ref{model_privacy_def} controls the amount of relaxation in how much the probabilities can differ in Definition \ref{model_privacy_def}.  The general recommendation in the DP literature is to set $\delta \ll 1/n$, where $n$ is the number of records in the training set \cite{ponomareva2023dp},  as this implies that each record in the training set has a worst case probability $\delta$ of being identified, and if $\delta \ll 1/n$, then the expected number of succesfully identified records is less than 1. We recommend setting the $\delta$ in Epsilon* (and Definition \ref{model_privacy_def}) by the same guidelines}.   {\color{black}
  
  {\color{black} 
Note that as the threshold $\tau = q_t(\theta)$ goes to $-\infty$,  $t$ goes to 0, and $\eta_t$ goes to 1.  Alternatively,  when $\tau = q_t(\theta)$ goes to $\infty$,  $t$ goes to 1 and $\eta_t$ to 0.  Both of these cases (($t = 0$,  $\eta_t$ = 1) and ($t = 1$,  $\eta_t$ = 0)) result in an $\epsilon^*$ value of 0, as the 1 inside the logarithm in \eqref{eps_star_levels} ensures that $\epsilon^*$ is always non-negative,  including when $\delta > 0$. This is consistent with $\delta$ governing the amount by which the right-hand-side of the inequalities in Definition \ref{model_privacy_def} can deviate from the $\epsilon$ guarantee.}

{\color{black}
\subsubsection{Epsilon* vs lower bounds for the privacy loss of the training mechanism}
Epsilon* is a lower bound on the privacy loss of one given model instance, i.e., the $\epsilon$ in our empirical privacy Definition \ref{model_privacy_def} (as shown in Appendix A (Section \ref{subsec:appendix_lower_bound_for_def1})),  and not on the privacy loss of the mechanism that generated that model instance.  But one can, in fact, define a version of Epsilon* for the DP training
mechanism itself (setup described in Appendix A), and obtain a statistically valid lower bound on the $(\epsilon, \delta)$-privacy quantification of the training
mechanism.We call the mechanism-version of Epsilon* $\bar{\epsilon}$, and we prove in Appendix A (Section \ref{appendix:epsilon_bar}) the following result connecting $\epsilon^*$ to the DP-mechanism lower bound $\bar{\epsilon}$:

\begin{thm}\label{eps_star_vs_mech}
 Let $\bar{\epsilon}$ be the mechanism-level lower bound on privacy loss,  and let
  $\epsilon^*=\epsilon^*(\theta)$ be the (random) individual model-level lower bound on privacy loss. Then
\begin{equation}
\bar{\epsilon} \leq \log\left[E_{\theta}\left(e^{\epsilon^*}\right)\right]\label{relationship_eq}
\end{equation}
\end{thm}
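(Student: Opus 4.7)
The plan is to integrate the pointwise inequalities defining $\epsilon^*(\theta)$ from Definition \ref{def_eps_star} against the distribution of the random model $\theta = M(D)$, exploiting the key fact that the quantile parametrization in \eqref{non_train_quantile} makes each false positive rate $t_i$ deterministic across realizations of $\theta$, while only $\eta_i(\theta)$ carries the randomness.

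First I would recall the Appendix A construction of $\bar{\epsilon}$. Since $t_i$ is $\theta$-independent by design, the mechanism-level FPR equals $t_i$ and the mechanism-level FNR is $\bar{\eta}_i = E_\theta[\eta_i(\theta)]$; thus $e^{\bar{\epsilon}}$ is the maximum over $i$ and over the four ratio forms of \eqref{eps_star_levels} evaluated at $(t_i, \bar{\eta}_i)$, joined with $1$. From Definition \ref{def_eps_star}, for every realization of $\theta$, $e^{\epsilon^*(\theta)}$ dominates each ratio in \eqref{eps_star_levels} pointwise. Since $e^{\epsilon^*(\theta)} \geq 1$ always, $E_\theta[e^{\epsilon^*(\theta)}] \geq 1$ handles the constant-$1$ term. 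For the two ratios in which $\eta_i(\theta)$ appears linearly in the numerator, namely $(1-\delta-\eta_i(\theta))/t_i$ and $(\eta_i(\theta)-\delta)/(1-t_i)$, taking expectations over $\theta$ commutes with the deterministic denominator, so integrating the pointwise bounds yields $E_\theta[e^{\epsilon^*(\theta)}] \geq (1-\delta-\bar{\eta}_i)/t_i$ and $E_\theta[e^{\epsilon^*(\theta)}] \geq (\bar{\eta}_i - \delta)/(1-t_i)$ directly.

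The main obstacle is the two remaining ratios, in which $\eta_i(\theta)$ appears in the denominator. For $(1-\delta-t_i)/\eta_i(\theta)$, the pointwise bound gives $E_\theta[e^{\epsilon^*(\theta)} \eta_i(\theta)] \geq 1-\delta-t_i$, but the target requires $\bar{\eta}_i \cdot E_\theta[e^{\epsilon^*(\theta)}] \geq 1-\delta-t_i$, which differs from the integrated bound by an a-priori uncontrolled covariance term. My plan is to resolve this by exploiting the symmetry of \eqref{eps_star_levels} under $(t,\eta)\leftrightarrow(\eta,t)$: parametrizing the rejection regions alternatively via quantiles of the \emph{training} loss (so that $\eta_i$ is deterministic and $t_i(\theta)$ is random) lets the same integration argument handle the two denominator-$\eta$ ratios by swapping the roles of $t$ and $\eta$. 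Taking the max over $i$ and over both parametrizations, then passing to the logarithm, yields $\bar{\epsilon} \leq \log E_\theta[e^{\epsilon^*(\theta)}]$.
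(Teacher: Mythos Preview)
There is a genuine gap in your treatment of the two ratios with $\eta_i(\theta)$ in the denominator. Swapping to the training-quantile parametrization does make $\eta$ deterministic and $t$ random at the \emph{model} level, but after averaging over $\theta$ the mechanism-level pairs become $(\bar t_s,s)$ with $\bar t_s=E_\theta[t_s(\theta)]$, and these are in general different from the pairs $(t_i,\bar\eta_i)$ that define $\bar\epsilon$ in \eqref{eps_star_mechanism}. Your dual-parametrization bounds therefore control quantities like $(1-\delta-\bar t_s)/s$, not the target $(1-\delta-t_i)/\bar\eta_i$. The two curves $t\mapsto(t,\bar\eta_t)$ and $s\mapsto(\bar t_s,s)$ trace the same set of thresholds for each fixed $\theta$, but they do not coincide once you take expectations, so ``max over both parametrizations'' is bounding a different mechanism-level quantity than the one in the theorem.

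The paper closes exactly this gap with a one-line convexity argument that subsumes your two ``easy'' linear cases. For each fixed $t\in(0,1)$ the four maps
\[
\eta\mapsto\frac{1-\delta-\eta}{t},\quad
\eta\mapsto\frac{(1-\delta-t)_+}{\eta},\quad
\eta\mapsto\frac{\eta-\delta}{1-t},\quad
\eta\mapsto\frac{(t-\delta)_+}{1-\eta}
\]
are all convex on $(0,1)$, as is the constant $1$; the pointwise maximum of convex functions is convex, so $e^{\epsilon^*(\theta)}$ is a convex function of the random vector $(\eta_{t_i}(\theta))_{i=1}^k$. A single application of Jensen's inequality, together with $E_\theta[\eta_{t_i}(\theta)]=\eta_{t_i}$ from \eqref{fnr_mechanism}, gives $E_\theta\bigl[e^{\epsilon^*(\theta)}\bigr]\ge e^{\bar\epsilon}$ immediately. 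In particular, the obstacle you flagged for $(1-\delta-t_i)/\eta_i(\theta)$ disappears because $E_\theta[(1-\delta-t_i)/\eta_i(\theta)]\ge(1-\delta-t_i)/\bar\eta_i$ already follows from convexity of $1/\eta$; no reparametrization is needed.
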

 We note that $\epsilon^*$ in Theorem \ref{eps_star_vs_mech} is now random,  as it is a function of the model $\theta$,  which in turn is no longer just an observation of a random mechanism but a random variable itself. The expectation in \eqref{relationship_eq} is taken with respect to this randomness.  
 
 We also note that,  while Epsilon* in \eqref{eps_star_levels} can even be computed for model instances from deterministic training algorithms such as boosted decision trees,   in such deterministc cases, however, we cannot talk about lower bounds for differential privacy mechanisms ($\bar{\epsilon}$),  as those mechanisms require infusing randomness. 
 }
\subsection{Parametric Epsilon*}
\label{subsec:param}
We show in {\color{black}Appendix B} that estimating FNRs and FPRs from the eCDFs of loss data sets can lead to unstable implementations of Epsilon* due to the fact that most relevant FNR and FPR values are usually those very close to 0 and 1, where noise is also likely to become amplified by the maximization in (\ref{eps_star_levels}).  We avoid this instability by fitting parametric Gaussian models to the training and non-training (population) loss distributions,  where we first normalize the losses to [0,1],  and then apply a logit transformation similar to the one used by \cite{Carlini2022membership} (Appendix B). {\color{black} Evaluating the cumulative distribution functions (CDFs) of the fitted parametric distributions to loss data instead of evaluating the empirical CDFs of the training and non-training loss data results in an improved accuracy and sensitivity of Epsilon*, as we show in Appendix B.}

Let $F^{pop}$ and $F^{tr}$ be the cumulative distribution functions {\color{black}for the fitted parametric distributions to} the population and training loss data respectively. Recall that for $0 < t <1$,  we select $q_t = q_t(\theta)$ to be the $t$- quantile of $F^{pop}$ in (\ref{non_train_quantile}), so that $q_t = \left(F^{pop}\right)^{-1}(t)$. The FPR is $t$ by equation (\ref{non_train_quantile}), and the corresponding FNR is 
\begin{equation}
\eta_t = 1 - F^{tr}(q_t).
\end{equation}
If $F^{pop}$ and $F^{tr}$ are parametric distributions,  we can obtain a higher lower bound than in (\ref{eps_star_levels}) by letting $k \rightarrow \infty$ and taking supremum instead of maximum in \eqref{eps_star_levels},  thus computing Epsilon* as
\begin{align*}
\epsilon^* &= \log \Big[ \sup_{0<t<1} \max \big(\frac{1-\delta - \eta_t}{t}, \frac{1-\delta-t}{\eta_t},  \\& {\phantom{aaaaa}} \frac{\eta_t-\delta}{1-t}, \frac{t-\delta}{1-\eta_t},1\big)\Big]\\
&= \log \Big[ \max \big( \sup_{0<t<1}\frac{1-\delta - \eta_t}{t},   \sup_{0<t<1}\frac{1-\delta-t}{\eta_t},  \\ &{\phantom{aaaaa}} \sup_{0<t<1}\frac{\eta_t-\delta}{1-t},  \sup_{0<t<1}\frac{t-\delta}{1-\eta_t},1\big)\Big]\\
& =: \log\left[\max(m_1, m_2,m_3,m_4,1)\right]. 
\end{align*}
The values $m_1, m_2,m_3,m_4$ can be found numerically.\footnote{ We observed in our implementation that for small values of $\delta$, the maximum of an {\color{black}$m_i $ for $i =1, ...,4$} can occur  at $t$ very close to 0 or very close 1, which can lead to floating point errors. In our implementation,  we avoid these errors by considering FNR and FPR values between $(\delta, 1-\delta)$.  Another approach to avoiding such numerical errors could be to consider Taylor-like approximations for the fractions in (\ref{eps_star_levels}).}  In particular, each fraction will have one limit (as $t$ goes to 0 or 1) $-\infty$ and the other limit $1-\delta$,  which means its supremum is either $1-\delta$, or the value at {\color{black} one of its local maxima}, whichever is greater.


\section{Experiments}
\label{sec:experiments}

\subsection{Setup}
\label{subsec:setup}
We implemented the parametric version of Epsilon* and evaluated its values on two types of ML deep neural net (DNN) models trained on public data sets: UCI Adult\footnote{http://archive.ics.uci.edu/ml} and Purchase-100\footnote{https://www.kaggle.com/c/acquire-valued-shoppers-challenge/data}.  The UCI Adult data set has about 48,000 data points and predicts the likelihood of someone having an income above \$50,000 based on demographic features obtained from the US census such as age  and salary.  It was split into a training set $D$ of 32,561 rows and a non-training (population) set of 16,280 rows.  We build a model to train on this data set following previously reported benchmark DNN architectures \cite{shokri2017membership}.  

The Purchase-100 dataset consists of roughly 600 binary features and 300,000 records,  with a multi-class vector output of dimension 100. We randomly split the data set into a training set of size 208,731 and a non-training (population) set of size 102,809,  then created an overly overfitted DNN model with three layers, each of 100 nodes, trained until the training accuracy was 100\%.  {\color{black}Over-fitting has been associated with poor privacy \cite{yeom2018privacy}, as it results in a training output distribution that is different from the non-training output distribution.}

\subsubsection{Privacy strategies}
\label{subsubsec:strategies}
We considered several baseline models corresponding to training for different number of epochs  (1, 5, 10,  20 epochs for Adult,  and 10, 50, 100 for Purchase-100), as well as training with DP-SGD  until reaching a DP $\epsilon$ value of 1, 3, 10 and 100, where $\delta$ was set to $1/n\log n$, for $n$ = training set size.   As showcased by other work \cite{denison2022private, ponomareva2023dp},  achieving good model utility when training with DP-SGD is non-trivial, and requires at the minimum a search through several hyperparameter values such as batch size, learning rate,  clipping norm, and number of microbatches.  {\color{black} We grid searched through a few hyperparameter values inspired by recent work describing hyperparameter tuning for DP-SGD \cite{denison2022private}}: we used a SGD optimizer with a momentum value of 0.9 and learning rate of 0.01, and varied the clipping norm in the set \{1, 10,  30\}, the batch size in the set \{64,  512\} for Adult and \{256, 1024\} for Purchase-100, and the number of micro-batches in the set \{1, 32\}. We reached a desired DP epsilon value by performing a binary search on the noise multiplier parameter when all other parameter were set.  This resulted in 51 experiments (hyperparameter -- privacy strategy combinations) for Purchase-100 and 52 experiments for Adult.

In performing the experiments, we fixed the training and non-training data sets (i.e.  we do not resample), and repeated the experiments for each strategy five times to observe variability due to training with SGD, resulting in more than 250 model instances for each data set.  {\color{black} Note that, in plotting the privacy-utility landscape,  we consider the DP parameter $\epsilon$ used in training as part of the hyperparameter set defining a \textit{privacy mitigation} strategy,  along with the other hyperparameters (batch size, clip norm, etc.).  In contrast,  Epsilon* describes a \textit{measurement} for that strategy's privacy risk.} We measure the {\color{black}strategy's utility using} area under the receiver operating characteristic (AUROC) curve on the non-training set for the Adult models, and accuracy on the test set for Purchase-100 due to the multi-class objective.  {\color{black}For training with DP, we used the TensorFlow Privacy library\footnote{https://www.tensorflow.org/responsible\_ai/privacy/guide} (v.0.5.1), which utilizes a R\'enyi differential privacy (RDP){\color{black}\cite{mironov2017renyi}} accountant function in order to estimate the DP parameter $\epsilon$ used when training.}

\subsection{Results}
\label{subsec:Results}
To understand the privacy-utility trade-offs, we adopt a multi-objective decision-making framework \cite{roijers2017multi} where we show a privacy-utility plot for each data set and highlight the Pareto frontier on each plot.  {\color{black}The Pareto frontier  of a given set of points in the privacy-utility plane is the convex hull of strategies in this plane,  taken in a direction that improves the privacy or the utility, i.e.,  any strategy on the Pareto frontier has the property that there are no two other strategies whose convex combination results in a point with higher utility or higher privacy. } 

We display values of utility and Epsilon* for the public data set models.  {\color{black} For each privacy-utility plot,  in addition to highlighting the Pareto frontier,  we also show marginal densities (using kernel density estimation) of the categories of strategies of the same type,  e.g.,  marginal density of Epsilon* for strategies where we train with DP versus baseline strategies where we do not, in order to visualize the relative values of the metrics for each category of strategies.}

\subsubsection{Purchase-100}
\begin{figure*}[h!]
  \centering
  \includegraphics[width=0.74\linewidth]{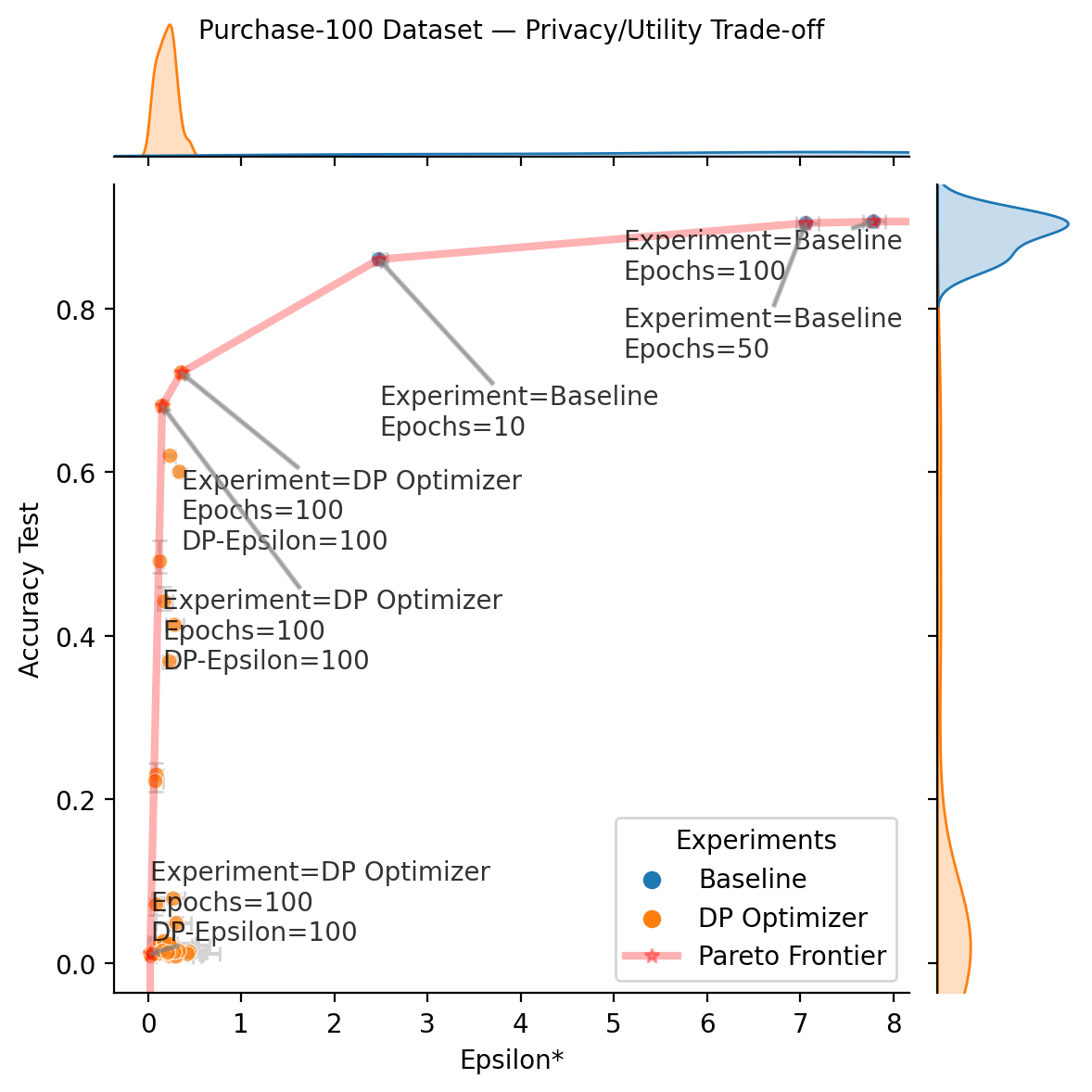}
  \caption{Privacy-Utility trade-off for the Purchase-100 data set.  Each point correspons to one strategy (same hyperparameter set and privacy mitigation),  averaged over five model instances.  Error bars correspond to the minimum and maximum values over the five model instances. }
  \label{fig:purchase}
\end{figure*}
Figure \ref{fig:purchase} displays the privacy-utility plot for the Purchase-100 data set.  Each point is the average of five experiments with the same hyperparameters and DP training $\epsilon$ (if applicable),  and therefore even if we trained to the same DP $\epsilon$,  we may still have different utility and Epsilon* values depending on the set of hyperparameters used.  {\color{black} For instance,  we observe on the Pareto frontier of Figure \ref{fig:purchase} multiple strategies trained with  DP to the same $\epsilon$ (in this case 100),  but different hyperparameter sets.  Notice that, even though the  Epsilon* values for these three sets of experiments are relatively similar (and all less than 1), the test accuracy values are significantly different,  varying from 1\% to 72\%. } Similarly, we have multiple points for baseline models,  corresponding to training for different number of epochs. 

Baseline experiments that only vary the number of training epochs show that overfitting is correlated with poor privacy metrics, and provide a quantifiable value for the privacy-utility trade-off. {\color{black} We observe that increasing the number of training epochs increases the test accuracy of the model: 86\% test accuracy when training for 10 epochs, to 90.4\% when we train for 50, and 90.6\% when we train for 100 epochs, but also increases Epsilon* from 2.48 at 10 epochs to 7.06 at 50 epochs, and then 7.79 at 100 epochs.  This behavior is consistent with the intuition that the more epochs we train, the more the model overfits, which results in higher privacy risk metric values.}

{\color{black} When training with DP,  Epsilon* values become significantly smaller:} the highest Epsilon* experiments reach values of 7.8 for baseline models tuned with different hyperparameters, while all experiments where we train with DP had an Epsilon* value below 1.  This is also visible from the marginal density of Epsilon* on the top of Figure \ref{fig:purchase}, which is concentrated around values closer to 0 than to 1.  However, 
DP has a sizable impact on model utility, as the marginal densities of accuracy values with and without DP,  displayed on the right of the plot, also show. This is an example of the complexities of using DP in real-world scenarios that are commonly discussed in the literature as well (e.g. Fig. 2, Section 5.4 in \cite{ponomareva2023dp}).

\subsubsection{Adult}
\begin{figure*}[h!]
  \centering
  \includegraphics[width=0.74\linewidth]{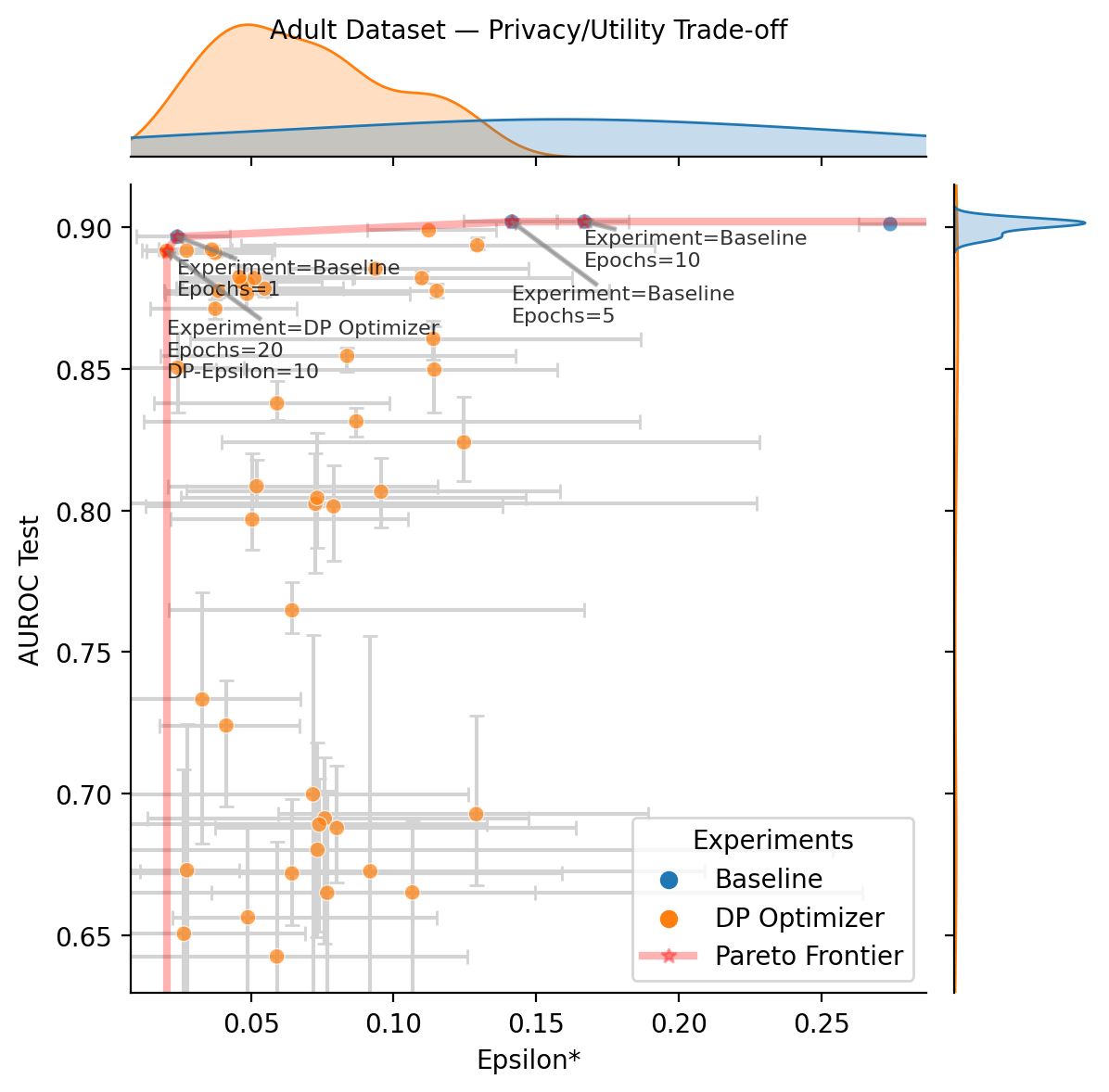}
  \caption{Privacy-Utility trade-off for the Adult data set.  Each point correspons to one strategy (same hyperparameter set and privacy mitigation),  averaged over five model instances.  Error bars correspond to the minimum and maximum values over the five model instances. }
  \label{fig:adult}
\end{figure*}
Figure \ref{fig:adult} shows the privacy-utility landscape for the Adult data set.  We note that all our model instances resulted in Epsilon* values below 0.3, which in comparison to Purchase-100, do not indicate privacy risk for this model across experiments and hyperparameters. Even so,  we observe an increase in Epsilon* as the number of epochs is increased for the baseline experiments, {\color{black} and model instances trained with DP having lower values of Epsilon* than baseline (non-DP) model instances trained for 5 or more epochs}. 
The “elbow” model instance in this Pareto frontier corresponds to a baseline model trained for a single epoch,  which nonetheless outperforms most DP models as they lay below the Pareto frontier.  We also note that even though the {\color{black} min-max} ranges of Epsilon* for the Adult experiments {\color{black}(error bars in Figure \ref{fig:adult})} appear to be larger than those for the Purchase-100 experiments,  the absolute values of the averages are much smaller for Adult {\color{black} (less than 0.3)} than for Purchase-100 {\color{black} (up to 7.8)},  indicating that the latter models likely exhibit generally higher privacy risk than the former. 


\section{Conclusion}
\label{sec:conclusion}
We introduced Epsilon*, a new privacy metric for measuring the privacy of a single model instance {\color{black} that enables privacy auditors to be independent of model owners} {\color{black}by requiring only black-box access to model predictions.  This indepenendence is very important in industry settings,  where the teams charged with productionalizing the models and privacy auditors may be distinct,  the latter not having model-specific domain knowledge and being tasked to measure privacy risk at scale,  for a wide range of use cases.  

We show how Epsilon* can be derived from a model-level  {quantification of} privacy, which we call \textit{empirical privacy}, and to which the hypothesis test formulation of DP can be extended.  
We note that our goal in defining empirical privacy is not to re-invent differential privacy -- firstly, we aim to \textit{measure}, not mitigate, model-level privacy risk, and secondly,  DP is already established as the gold standard mitigation strategy of privacy risk for training mechanisms, which  naturally extends to the model instances of those mechanisms.  
Our task instead was motivated by the industry-wide need of measuring the privacy risk of large model instances,  \textit{regardless of whether DP was used in training}, and without having to insert canaries,  re-sample the training set,  or re-train the model.  Epsilon*, the metric derived from our privacy definition,  satisfies these design requirements. }

We show in experiments on benchmark public data sets that Epsilon* is sensitive to privacy mitigation with DP. We also establish a relationship between Epsilon* and a lower bound on {\color{black}the privacy loss of} the training mechanism, and show how to implement Epsilon* to avoid numerical and noise amplification instability.   

\section{Acknowledgments}
The authors would like to thank Ryan Rogers,  Rina Friedberg,  Mark Yang,  and the anonymous reviewers for their constructive feedback that greatly improved the clarity of this manuscript.

\bibliography{Epsilon_star_reference.bib}

\newpage

\clearpage

\section{Appendix A: Proofs}
\label{sec:appendix_A}

\subsection{Epsilon* is a valid lower bound for $\epsilon$ in Definition \ref{model_privacy_def}}
\label{subsec:appendix_lower_bound_for_def1}
We provide the following analog  of Theorem \ref{Kairouzhm} in our empirical,  model-level
privacy quantification (Definition
\ref{model_privacy_def}):
\begin{thm}\label{Kairouzhm_for_model}
A trained model  instance parameterized by $\theta$,  and obtained from using a training mechanism M on data set $D = \{(x_i, y_i), i = 1, ..., n\} $,  is $(\epsilon,  \delta)$-private if and only if for all rejection regions $R$ of the form \eqref{R_threshold}:
\begin{align}
&\mathbb{P}_{(X, Y)\sim {\color{black}\mathcal{P}}}\left[(f_{\theta}(X), Y) \in R\right] +& \notag \\ & \phantom{aaaaa} e^{\epsilon} \mathbb{P}_{(X, Y) \sim D} \left[(f_{\theta}(X), Y) \in \bar{R}\right]  \leq    e^{\epsilon} + \delta \label{m1}\\
 &\mathbb{P}_{(X, Y) \sim D} \left[(f_{\theta}(X), Y) \in \bar{R}\right] + & \notag  \\ & \phantom{aaaaa} e^{\epsilon} \mathbb{P}_{(X, Y)\sim {\color{black}\mathcal{P}}}\left[(f_{\theta}(X), Y) \in R\right] \geq 1-   \delta \label{m2}\\
 &\mathbb{P}_{(X, Y)\sim {\color{black}\mathcal{P}}}\left[(f_{\theta}(X), Y) \in R\right] + & \notag \\ & \phantom{aaaaa} e^{\epsilon}  \mathbb{P}_{(X, Y) \sim D} \left[(f_{\theta}(X), Y) \in \bar{R}\right] \geq  1-   \delta \label{m3}\\
 &\mathbb{P}_{(X, Y) \sim D} \left[(f_{\theta}(X), Y) \in \bar{R}\right] + & \notag \\ & \phantom{aaaaa} e^{\epsilon}\mathbb{P}_{(X, Y)\sim {\color{black}\mathcal{P}}}\left[(f_{\theta}(X), Y) \in R\right]  \leq    e^{\epsilon} + \delta \label{m4}
\end{align}
where $f_{\theta}(x)$ is the {\color{black}prediction} of the trained model {\color{black} $\theta$} with input $x$.
\end{thm}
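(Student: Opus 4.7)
The plan is to adapt the proof of Theorem~\ref{Kairouzhm} to our empirical model-level setting. The key tool is the identity $\mathbb{P}(\bar R) = 1 - \mathbb{P}(R)$, and the proof in both directions reduces to elementary algebraic rearrangement of the inequalities in Definition~\ref{model_privacy_def}. I abbreviate $P_0(R) := \mathbb{P}_{(X,Y)\sim\mathcal{P}}[(f_\theta(X),Y)\in R]$ and $P_1(R) := \mathbb{P}_{(X,Y)\sim D}[(f_\theta(X),Y)\in R]$, so that Definition~\ref{model_privacy_def} asserts $P_0(R) \leq e^\epsilon P_1(R) + \delta$ and $P_1(R) \leq e^\epsilon P_0(R) + \delta$.

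For the forward direction, I start from \eqref{def_eq1} and substitute $P_1(R) = 1 - P_1(\bar R)$ to rearrange it into (m1); the same substitution applied to \eqref{def_eq2} yields (m2). For (m3) and (m4), I apply \eqref{def_eq1} and \eqref{def_eq2} with $\bar R$ in place of $R$ and substitute $P_0(\bar R) = 1 - P_0(R)$, giving (m3) and (m4) respectively. For the reverse direction, I reverse each manipulation: (m1) and (m3) together recover \eqref{def_eq1} for $R$ and for $\bar R$, while (m2) and (m4) recover \eqref{def_eq2} for $R$ and for $\bar R$, from which the empirical privacy condition of Definition~\ref{model_privacy_def} follows.

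The main subtlety I anticipate, and the step that requires the most care, is that the family \eqref{R_threshold} consists of closed threshold sets $\{l \leq \tau\}$ and is not literally closed under complementation, since $\bar R = \{l > \tau\}$ has the open form. Applying \eqref{def_eq1} and \eqref{def_eq2} to $\bar R$ is therefore not immediately licensed by Definition~\ref{model_privacy_def} as written. I plan to resolve this by interpreting the biconditional as a characterization of the natural symmetrization of Definition~\ref{model_privacy_def}, namely, the condition that \eqref{def_eq1} and \eqref{def_eq2} hold for $R$ of the form \eqref{R_threshold} \emph{and} for its complement; the four inequalities (m1)--(m4) then correspond one-to-one with the two pairs of definitional inequalities applied to $R$ and $\bar R$. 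This reading is consistent with the spirit of Theorem~\ref{Kairouzhm}, where $R$ ranges over all measurable sets and complements are automatically included, and it recovers the four-fold structure that Kairouz et al.\ obtain after analogous algebraic expansion. For the continuous loss distributions used in Section~\ref{subsec:param}, the extension is in fact harmless, since the two inequalities for $\bar R$ can be recovered from their versions at thresholds $\tau' > \tau$ via a monotone-limit argument combined with $\mathbb{P}$-null-set equivalence of $\{l > \tau\}$ and $\{l \geq \tau\}$.
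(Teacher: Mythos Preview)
Your approach is correct and matches the paper's own proof, which simply notes that \eqref{m1} and \eqref{m2} are algebraic rearrangements of \eqref{def_eq1} and \eqref{def_eq2} using $P_1(R)=1-P_1(\bar R)$, while \eqref{m3} and \eqref{m4} come from applying \eqref{def_eq1} and \eqref{def_eq2} with $R$ replaced by $\bar R$ and substituting $P_0(\bar R)=1-P_0(R)$. In fact you are more careful than the paper: the complementation subtlety you flag (that $\bar R$ is not literally of the form \eqref{R_threshold}) is glossed over in the paper's one-line proof, and your proposed resolutions---reading the biconditional as a symmetrized characterization, or invoking the continuous-loss monotone-limit argument---are exactly the right way to make the statement precise.
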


The proof of Theorem \ref{Kairouzhm_for_model} follows easily from Definition \ref{model_privacy_def}, just like Theorem \ref{Kairouzhm} follows easily from the DP definition in Equation (\ref{e:DP.dwork}): (\ref{m1}) and (\ref{m2}) are directly from (\ref{def_eq1}) and (\ref{def_eq2}) respectively,  and (\ref{m3}) and (\ref{m4}) follow from (\ref{def_eq1}) and (\ref{def_eq2}) by replacing $R$ with $\bar{R}$.

We can obtain a valid lower bound
on $\epsilon$ in  Definition \ref{model_privacy_def} by utilizing Theorem \ref{Kairouzhm_for_model},  and exploiting the intuition that any $\epsilon$ value that is smaller than the minimum value satisfying Eq. (\ref{m1} - \ref{m4}) is a counter-example to the model being $(\epsilon, \delta)$-private as in Definition \ref{model_privacy_def}.  Therefore,  a valid lower bound on the resilience of that model instance can be obtained by solving the following linear program, where the constraints come from Eq. (\ref{m1} - \ref{m4}) with the FPR and TPR notation of (\ref{non_train_quantile}) and (\ref{train_quantile}):
\begin{align*}
\text{minimize} \phantom{aa} &\epsilon \\
\text{subject to}  \phantom{aa}&t + e^{\epsilon} \eta_t  \leq   e^{\epsilon} + \delta \\
 &\eta_t + e^{\epsilon}t  \geq 1-   \delta \\
 &t + e^{\epsilon} \eta_t \geq 1-   \delta \\
 &\eta_t + e^{\epsilon}t \leq   e^{\epsilon} + \delta.
\end{align*}
To improve this lower bound (i.e., find a higher one),  we select 
multiple quantiles $t_1$, ..., $t_k$ in (\ref{non_train_quantile}) and
construct $k$ different rejection regions corresponding to these
levels,  $R_1(\theta)$,  ..., $R_k(\theta)$.  By definition
(\ref{non_train_quantile}), the false positive rate corresponding to
the rejection region $R_i(\theta)$ is exactly $t_i$, $i = 1, .., k$,
and we let $\eta_i$ be the false negative rate corresponding to the
rejection region $R_i(\theta)$. Then, each line in the linear program above will describe $k$ constraints corresponding to the $k$ rejection regions,  and its solution, $\epsilon^*$ in {\color{black} Definition} \ref{def_eps_star},  is a valid lower bound for $\epsilon$ in Definition
\ref{model_privacy_def} by Theorem \ref{Kairouzhm_for_model}. 

The best lower bound of this
  type is obtained when we {\color{black}choose the 
  quantiles $(t_i)$ densely,  i.e.  let $k\to\infty$}. This observations turns out to be very
  useful in the parametric Epsilon* described in {\color{black}Section \ref{subsec:param} and Appendix B}. 
  
 {\color{black}\subsection{Epsilon* vs lower bounds for the privacy loss of the training mechanism} \label{appendix:epsilon_bar}}
 Let us now define a version of Epsilon* for the training
mechanism itself,  which will serve as a statistically valid lower} {\color{black}bound, say
$\bar{\epsilon}$,  on the $(\epsilon, \delta)$-privacy quantification of the training
mechanism.  Recall that, generally, 
$\theta$ in Definition \ref{model_privacy_def} is an observation from a random variable -- obtained by applying random training mechanism $M$ to data set $D$. Correspondingly, we 
view the lower bound $\epsilon^*$ corresponding to a model
instance as an observation from a random variable (a function of the training mechanism). }{\color{black}Under an
appropriate setup, there is a connection between the mechanism-level 
$\bar{\epsilon}$ and the model-level random $\epsilon^*$. {\color{black}We now provide
the setup and the connection between these quantities.}
}

Suppose that instead of measuring the resilience of a trained model we
now wish to measure the resilience of the training mechanism.
{\color{black}   Consider the probabilities on the left sides of the inequalities in
  Definition \ref{model_privacy_def} for any choice of {\color{black}(measurable)} rejection regions $R(\theta)$ of
the form (\ref{R_threshold}),  with additional averaging over
  choices of the training sets and randomness of the training mechanism:}
{\color{black}
\begin{align}
&  E_{D,\theta}\left[\mathbb{P}_{(X,Y) \sim {\color{black}\mathcal{P}}}\left[(f_{\theta}(X), Y) \in R(\theta) \right]\right] \leq \notag \\ & \phantom{aaaaaa} e^{\epsilon} E_{D,\theta}\left[\mathbb{P}_{(X,Y) \sim D}[(f_{\theta}(X), Y) \in R(\theta)]\right] + \delta,  \label{mechanism_privacy_0}\\
 &E_{D,\theta}\left[\mathbb{P}_{(X,Y) \sim D}[(f_{\theta}(X), Y) \in R(\theta)]\right]  \leq \notag \\ & \phantom{aaaaaa}  e^{\epsilon}E_{D,\theta}\left[\mathbb{P}_{(X,Y) \sim {\color{black}\mathcal{P}}}\left[(f_{\theta}(X), Y) \in R(\theta) \right]\right] + \delta,
 \label{mechanism_privacy}
\end{align}

where $E_{D,\theta}$ is the expectation taken with respect to the randomness
producing training set $D$ and the randomness infused by the training
mechanism{\color{black}, where $\theta = M(D)$}, {\color{black}$(X, Y) \sim \mathcal{P}$ refers to sampling according to distribution $\mathcal{P}$,  {\color{black} and $\mathbb{P}_{(X, Y) \sim D} \left[(f_{\theta}(X), Y) \in R\right]  =  \frac{1}{n}\sum_{i = 1}^n \mathbf{1}\left((f_{\theta}(x_i), y_i) \in R\right)$ as in Definition \ref{model_privacy_def}.}

The following result states that differential privacy implies \eqref{mechanism_privacy_0} and \eqref{mechanism_privacy}, and therefore these two equations can be used to derive a lower bound on the privacy loss of DP, which we call $\bar{\epsilon}$,  similar to how we used our Definition \ref{model_privacy_def} to derive Epsilon*.
\\
\begin{thm}\label{epsilon_bar_vs_epsilon}
Let $M$ be an ($\epsilon$, $\delta$)-differentially private training mechanism that generates model instances parameterized by $\theta$ given data sets $D$,  and let ${\color{black}\mathcal{P}}$ be {\color{black} the distributon over the data population.  }

Then,  {\color{black}the smallest} $\bar{\epsilon}$ that satisfies (\ref{mechanism_privacy_0}) and (\ref{mechanism_privacy}) for any choice of rejection region $R(\theta)$ of
the form (\ref{R_threshold}), is a lower bound for the $\epsilon$  parameter of $M$ for the same value of $\delta$.
\end{thm}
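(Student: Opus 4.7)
The plan is to show that the $(\epsilon,\delta)$-DP parameter $\epsilon$ of $M$ itself satisfies both \eqref{mechanism_privacy_0} and \eqref{mechanism_privacy}; since $\bar{\epsilon}$ is defined as the smallest value for which these inequalities hold (for every rejection region of the form \eqref{R_threshold}), this immediately yields $\bar{\epsilon}\le\epsilon$. I will focus on \eqref{mechanism_privacy_0}; \eqref{mechanism_privacy} follows from the symmetric DP bound applied to the same coupling. The core idea is a standard i.i.d.\ ``swap coupling'' that turns the comparison between the population and empirical probabilities into a comparison of $M$ evaluated on neighboring databases, at which point the DP hypothesis can be invoked pointwise and then integrated.

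First, I would unify both sides of \eqref{mechanism_privacy_0} as indicator expectations of a common event about $\theta$. For $z=(x,y)\in\mathcal{X}\times\mathcal{Y}$, define the measurable subset of weight space
\[
S(z)=\{\theta\in\Theta : (f_\theta(x),y)\in R(\theta)\}.
\]
Let $D=(Z_1,\ldots,Z_n)$ be the i.i.d.\ training sample drawn from $\mathcal{P}$ that is fed to $M$, set $\theta=M(D)$, and let $Z\sim\mathcal{P}$ be a fresh independent sample. Then the LHS of \eqref{mechanism_privacy_0} equals $E[\mathbf{1}(\theta\in S(Z))]$, while by linearity the empirical average inside $e^{\epsilon}$ on the RHS equals $\tfrac{1}{n}\sum_{i=1}^{n} E[\mathbf{1}(\theta\in S(Z_i))]$.

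Next, I exploit exchangeability of $(Z,Z_1,\ldots,Z_n)$. For each $i$, the permutation swapping $Z$ and $Z_i$ preserves the joint law, so
\[
E[\mathbf{1}(\theta\in S(Z))] \;=\; E\!\left[\,\mathbb{P}_{M}\!\bigl[M(D^{(Z,i)})\in S(Z_i)\bigr]\right],
\]
where $D^{(Z,i)}$ is the database obtained from $D$ by replacing its $i$-th record with $Z$. Crucially, $D^{(Z,i)}$ and $D$ are neighbors in the DP sense, so conditional on $(Z,Z_1,\ldots,Z_n)$ the set $S(Z_i)$ is a fixed measurable subset of $\Theta$ and the $(\epsilon,\delta)$-DP property of $M$ gives
\[
\mathbb{P}_{M}[M(D^{(Z,i)})\in S(Z_i)] \;\le\; e^{\epsilon}\,\mathbb{P}_{M}[M(D)\in S(Z_i)] + \delta.
\]
Taking outer expectations and averaging over $i\in\{1,\ldots,n\}$ collapses the left-hand side to the common value $E[\mathbf{1}(\theta\in S(Z))]$ and the right-hand side to $e^{\epsilon}\,E_{D,\theta}\bigl[\mathbb{P}_{(X,Y)\sim D}[(f_\theta(X),Y)\in R(\theta)]\bigr]+\delta$, which is exactly \eqref{mechanism_privacy_0}. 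Inequality \eqref{mechanism_privacy} is obtained in the same way by invoking the opposite DP bound $\mathbb{P}_{M}[M(D)\in S] \le e^{\epsilon}\,\mathbb{P}_{M}[M(D^{(Z,i)})\in S] + \delta$ on the identical coupling.

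The main obstacle is measurability bookkeeping rather than probability: one needs $S(z)$ to be jointly measurable in $(z,\theta)$ so that after conditioning on the outer randomness it is an honest measurable subset of $\Theta$ to which DP (which is stated pointwise in the rejection region) can be applied inside the expectation. Under the usual regularity of the loss $l$ and of the prediction map $(\theta,x)\mapsto f_\theta(x)$ this is automatic and is customarily suppressed in the DP auditing literature, so the substantive content of the proof is contained in the two-line swap coupling and the pointwise application of DP described above.
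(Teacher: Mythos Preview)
Your argument is correct and is essentially the same swap-coupling proof the paper gives: both apply the pointwise $(\epsilon,\delta)$-DP bound to the neighboring pair obtained by replacing one training record with a fresh population draw, then average over the index and over the sampling of $D$. Your use of exchangeability of $(Z,Z_1,\ldots,Z_n)$ is a slightly cleaner formulation than the paper's ``sample $(x_i^*,y_i^*)$ conditionally on not being in $D$'' construction, and you make the measurability caveat explicit, but the substance is identical.
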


\begin{proof}
{\color{black}If mechanism $M$ is $(\epsilon,  \delta)$-DP,  per \eqref{e:DP.dwork},  then if we fix $(x,y) \in {\color{black}\mathcal{X} \times \mathcal{Y}}$, we must have that for any two neighboring data sets $D$ and $D'$
\begin{align}
&\mathbb{P}_{\theta \sim M(D)}\left[(f_{\theta}(x), y) \in R(\theta)\right] \leq \notag \\ & \phantom{aaaaaa} e^{\epsilon}\mathbb{P}_{\theta \sim M(D')}\left[(f_{\theta}(x), y) \in R(\theta)\right] + \delta,\label{dp_1}
\end{align}
where on the left side the probability is  with respect to the randomness of the training mechanism  using $D$ as training set, and on the right using $D'$. Note that \eqref{dp_1} holds regardles of the choice of the set $R(\theta)$, assuming a measurability of the mapping $\theta \rightarrow R(\theta)$.
\eqref{dp_1} also holds whether or not $(x,y)$ is in $D$ and/or $D'$. 

 If $D = \{(x_1,y_1), ..., (x_n,y_n)\}$, then for each $i = 1,...,n$ we can use \eqref{dp_1} for the pair $(x_i, y_i)$ and any set {\color{black} $D'_i = D\setminus \{(x_i, y_i)\} \cup \{(x_i^*,y_i^*)\}$, where $(x_i^*,y_i^*)$ is sampled from $\mathcal{P}$ conditionally on not being in $D$.}  Summing up the resulting inequality over all points in $D$ and averaging,  gives us
{\color{black}
\begin{align}
&\frac{1}{n}\sum_{i=1}^n\mathbb{P}_{\theta \sim M(D)}\left[(f_{\theta}(x_i),y_i) \in \R(\theta)\right] \leq \notag \\ & \phantom{aaaaaa} e^{\epsilon}\frac{1}{n}\sum_{i=1}^n\mathbb{P}_{\theta \sim M(D_i')}\left[(f_{\theta}(x_i),y_i) \in R(\theta)\right] + \delta. \label{dp_3}
\end{align}
}
We can now {\color{black}take expectation in} \eqref{dp_3} over all training sets $D$.  {\color{black} The left side of \eqref{dp_3} averages to the left side of \eqref{mechanism_privacy}, with different notation.  The right side of \eqref{dp_3} averages to the right side of \eqref{mechanism_privacy}. {\color{black}Therefore,} we obtain} \eqref{mechanism_privacy}
\begin{align*}
& E_{D, \theta}\left[\mathbb{P}_{(X,Y) \sim D}\left[(f_{\theta}(X), Y) \in R(\theta)\right]\right] \leq \notag \\ & \phantom{aaaaaa} e^{\epsilon}E_{D,\theta}\left[\mathbb{P}_{(X,Y) \sim {\color{black}\mathcal{P}}}\left[(f_{\theta}(X), Y) \in R(\theta) \right]\right] + \delta.
\end{align*}
 \eqref{mechanism_privacy_0} can be obtained analogously by switching $D$ with $D'$ in \eqref{dp_1}.  This implies that any $\epsilon$ that breaks either  \eqref{mechanism_privacy_0} or  \eqref{mechanism_privacy} cannot satisfy the DP property \eqref{dp_1}, so the smallest $\bar{\epsilon}$ that satisfies both is a lower bound for the $\epsilon$ of $M$.}
\end{proof}

We now find an expression for $\bar{\epsilon}$ along the same lines as Theorem \ref{Kairouzhm} (recently also used by \cite{zanella2022bayesian}).
Suppose we choose a rejection region $R(\theta)$ of
the form (\ref{R_threshold}) with quantile $q_t(\theta)$ still chosen
according to (\ref{non_train_quantile}).  

Fixing a $t$,  recall that by (\ref{non_train_quantile}),
\begin{align*}
&\mathbb{P}_{(X, Y)\sim {\color{black}\mathcal{P}}}\left[(f_{\theta}(X), Y) \in R\right]  = \\& \phantom{aaaaaa}  \mathbb{P}_{(X, Y) \sim {\color{black}\mathcal{P}}}\left[l(f_{\theta}(X), Y) \leq q_t(\theta) \right] = t,
\end{align*} for any trained weights $\theta$. The false positive rate for the mechanism-level hypothesis test is then
\begin{align*}
FPR = E_{D,\theta}\left[\mathbb{P}_{(X,Y) \sim {\color{black}\mathcal{P}}}\left[(f_{\theta}(X), Y) \in R(\theta) \right]\right] = t.
\end{align*}
Note that the corresponding mechanism-level false negative rate, FNR, is 
\begin{align}
&\eta_t = FNR = E_{D, \theta}\left[\frac{1}{n}\sum_{i=1}^n \mathbf{1}(l(f_{\theta}{\color{black}(X_i), Y_i}) > q_t(\theta))\right] := \notag \\ & \phantom{aaaaaa} E_{D, \theta}\left(\eta_t (\theta)\right),
\label{fnr_mechanism}
\end{align}
where we view $\eta_t(\theta)$ as the FNR corresponding to a fixed
model $\theta$,  {\color{black} resulting from training mechanism $M$ on data set $D$,}  $l(f_{\theta}(X_i), Y_i)$ is the loss of the model
$\theta$ at point $(X_i, Y_i)$, and $D = \{(X_i, Y_i)_{i = 1, ..., n}\}$ is a sampled training set.

A lower bound on the privacy loss of a differentially private mechanism is then
\begin{equation}
\begin{split}
\bar{\epsilon} =& \log \Big[ \max_{i = 1, ..., k} \max \big(\frac{1-\delta - \eta_{t_i}}{t_i}, \frac{1-\delta-t_i}{\eta_{t_i}},  \\ & \phantom{aaaa} \frac{\eta_{t_i}-\delta}{1-t_i}, \frac{t_i-\delta}{1-\eta_{t_i}}, 1\big)\Big]
\label{eps_star_mechanism}
\end{split}
\end{equation}
with $\eta_{t_i}$ given by (\ref{fnr_mechanism}).

On the other hand,  Epsilon* for a given model $\theta$  is random,
and given by 
\begin{equation}
\begin{split}
\epsilon^*=\epsilon^*(\theta) = &  \log \Big[ \max_{i = 1, ..., k} \max \big(\frac{1-\delta - \eta_{t_i}(\theta)}{t_i},  \\ & \phantom{aaaa} \frac{1-\delta-t_i}{\eta_{t_i}(\theta)}, \frac{\eta_{t_i}(\theta)-\delta}{1-t_i}, \frac{t_i-\delta}{1-\eta_{t_i}(\theta)}, 1\big)\Big].
\label{eps_star_theta}
\end{split}
\end{equation}

With this setup, we have  
the following result connecting the random $\epsilon^*$ with the lower bound
$\bar{\epsilon}$ on the $(\epsilon, \delta)$-differential privacy of a training
mechanism (\textbf{Theorem \ref{eps_star_vs_mech}}):  
\begin{equation}
\bar{\epsilon} \leq \log\left[E_{\theta}\left(e^{\epsilon^*}\right)\right]
\end{equation}
}

\begin{proof}[Proof of Theorem \ref{eps_star_vs_mech}]
Notice that for each $t > 0$, the functions $\frac{1-\delta - \eta}{t}$,  $\frac{(1-\delta-t)_+}{\eta}$,  $\frac{\eta_-\delta}{1-t}$, $\frac{(t-\delta)_+}{1-\eta}$ are all convex functions of $0< \eta <1$.  Furthermore,  the maximum of convex functions is itself a convex function. Therefore,
 \begin{equation*}
 \begin{split}
& \max_{i = 1, ..., k} \max \big(\frac{1-\delta - \eta_{t_i}(\theta)}{t_i}, \frac{1-\delta-t_i}{\eta_{t_i}(\theta)}, \\ & \phantom{aaaaaaaaaaa} \frac{\eta_{t_i}(\theta)-\delta}{1-t_i}, \frac{t_i-\delta}{1-\eta_{t_i}(\theta)}, 1\big)
 \end{split}
\end{equation*}
is a convex function of the random vector $(\eta_{t_i}(\theta),  i  =
1, ...,  k)$. {\color{black} Taking expectation with respect to the
  randomness of the model weights $\theta$ gives us, by Jensen's inequality,}
 \begin{align*}
E\left(e^{\epsilon^*(\theta)}\right) &= E\Big[ \max_{i = 1, ..., k} \max \Big(\frac{1-\delta - \eta_{t_i}(\theta)}{t_i},  \\ & \phantom{aaaa} \frac{1-\delta-t_i}{\eta_{t_i}(\theta)}, \frac{\eta_{t_i}(\theta)-\delta}{1-t_i}, \frac{t_i-\delta}{1-\eta_{t_i}(\theta)}, 1\Big)\Big]\\
&\geq  \max_{i = 1, ..., k} \max \Big(\frac{1-\delta - E[\eta_{t_i}(\theta)]}{t_i},  \\ & \phantom{aaaa} \frac{1-\delta-t_i}{E[\eta_{t_i}(\theta)]}, \frac{E[\eta_{t_i}(\theta)]-\delta}{1-t_i}, \frac{t_i-\delta}{1-E[\eta_{t_i}(\theta)]}, 1\Big)\\
&= e^{\bar{\epsilon}}
\end{align*}
where we used (\ref{fnr_mechanism}).  We conclude that
\begin{equation*}
\bar{\epsilon} \leq \log\left[E_{\theta}\left(e^{\epsilon^*}\right)\right] 
\end{equation*}
\end{proof}

\section{Appendix B: Parametric Epsilon*}
\label{sec:appendix_B}
Recall from Equation (\ref{eps_star_levels}) that Epsilon* is computed from the estimated FNRs $\eta_i$ and FPRs $t_i$ of the hypothesis test of the MIA adversary.  
\begin{equation}
\begin{split}
\epsilon^* =& \log \Big[ \max_{i = 1, ..., k} \max \big(\frac{1-\delta - \eta_i}{t_i}, \frac{1-\delta-t_i}{\eta_i},  \\ & \phantom{aaaa} \frac{\eta_i-\delta}{1-t_i}, \frac{t_i-\delta}{1-\eta_i}, 1\big)\Big]
\end{split}
\end{equation}
Accurate estimates of FNRs and FPRs are therefore crucial to the accuracy of Epsilon*.  In fact,  the main cause for inaccurate values of Epsilon* is  noise amplification when the TPR/FPR becomes close to 0 and 1, which can be additionally inflated by numerical instability from computation involving maxima over logs of (ratios of) very small numbers.

To better visualize the noise impact,  Figure \ref{fig:boundary} shows the values of the first two quantities in (\ref{eps_star_levels})  versus $\frac{1}{2}(TPF + FPR)$ when we simulate loss data from known Gaussian distributions ($\mathcal{N}$(0,1) for training and $\mathcal{N}$(3,1) for non-training) and compute the FNR's and FPRs either by directly evaluating the CDFs of the known loss distributions (a),  or by evaluating the empirical CDFs of simulated losses (b).  In both cases, we remove FNR and FPR values less than $10^{-9}$ for numerical stability.
\begin{figure*}[t!]
    \centering
    \begin{subfigure}[t]{0.5\textwidth}
        \centering
        \includegraphics[height = 2in]{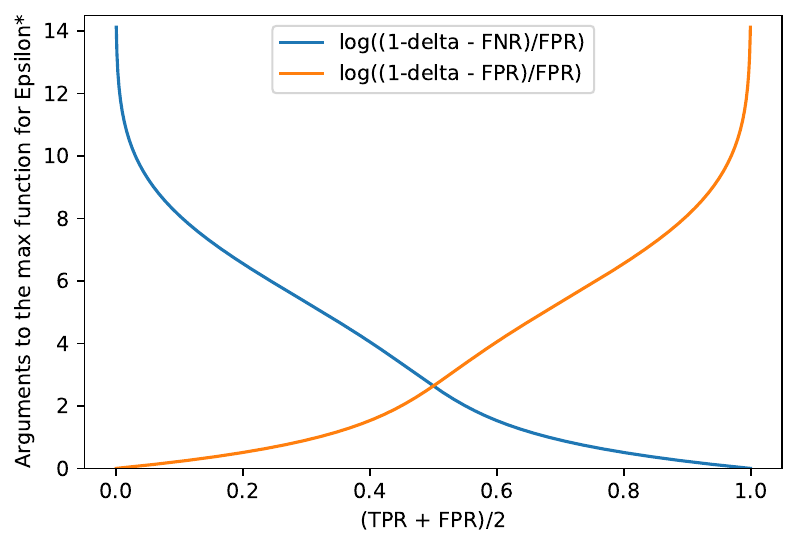}
        \caption{Evaluating the CDFs of the true loss distributions}
    \end{subfigure}%
    ~ 
    \begin{subfigure}[t]{0.5\textwidth}
        \centering
        \includegraphics[height = 2in]{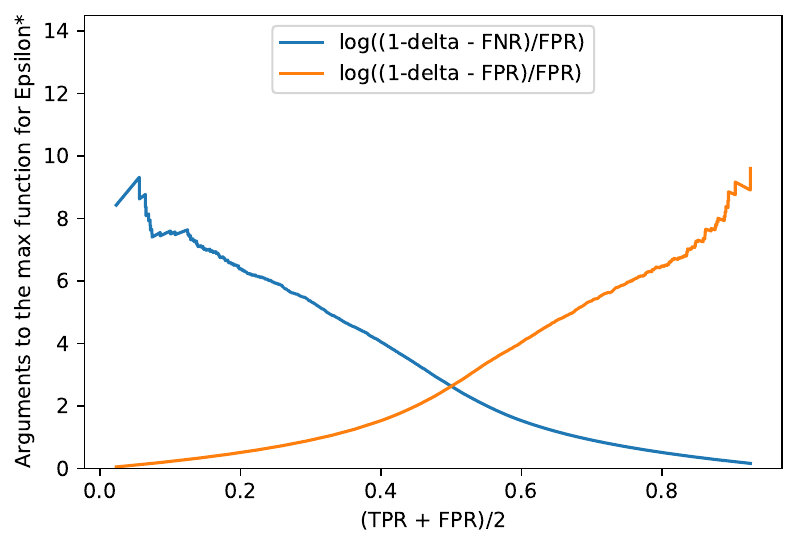}
        \caption{Evaluating the empirical CDFs from samples of the loss distributions}
    \end{subfigure}
    \caption{First two expressions in (\ref{eps_star_levels}), when evaluating the true versus the empirical CDFs from sampled data}
    \label{fig:boundary}
\end{figure*}
As seen in Figure \ref{fig:boundary},  most noise and most signal occurs at the very ends of the FNR/ FPR arrays,  since Epsilon* is the maximum over (noisy) quantities in \ref{eps_star_levels},  and that maximum occurs in the noisiest part of the curves.

\subsection{Simulation of Epsilon* from empirical vs parametric distributions fitted to model losses}
\label{subsec:simulation}
To evaluate the impact of noise amplification and mitigation strategies, we set up a simulation where we decide what the true loss distributions are, and generate loss samples from these distributions. This allows us to compare the Epsilon* values from evaluating the true TPR/FPRs CDFs versus estimating TPR/FPR values from sampled losses empirically.

In our simulation, we mostly focus on Gamma loss distributions, as they have the desired support in $[0, \infty)$. Specifically, we consider the training and non-training set losses to be Gamma distributed with parameters $(k_1, \theta_1)$ and  $(k_2, \theta_2)$ respectively.  Our goal for this simulation is to compare the values of Epsilon* when the $FNRs$ and $FPRs$ used to computed it in Equation (\ref{eps_star_levels}) are obtained by either:
\begin{itemize}
\item directly evaluating the CDFs of the known Gamma distributions used to generate the training and non-training losses; 
\item evaluating the empirical CDFs of the sampled training and non-training losses; or
\item fitting Normal distributions to a transformation of the losses,  then evaluating the CDFs of the fitted parametric distributions.
\end{itemize}
The transformation of losses mentioned in the last bullet point is the following, similar to the one used by [Carlini et al. (2022)]:
\begin{enumerate}
\item Normalize losses to [0,1] by subtracting min(train loss,  non-train loss) from each loss value and dividing by (max(train loss,  non-train loss) - min(train loss,  non-train loss)).
\item Shift the normalized losses from Step 1 away from 0 by adding $\alpha$ (=1).
\item Exponentiate the negative of losses in Step 2:  $p$ = $\exp$(-Step 2 losses).
\item Pass the losses form Step 3 through a logit function: $\phi = \log(p) -\log(1-p)$
\end{enumerate}

Note that evaluating the CDFs of the known generating Gamma distributions is only available in our simulation, as in practice we do not posses knowledge of the true distributions generating the observed losses. In each of the three variants, we evaluate the CDFs at 2 million points correponding to 1 million equally spaced quantiles of the training distribution and 1 million equally spaced quantiles of the non-training distribution.  For numerical stability,  we remove FNR and FPR values outside the interval (0.001, 0.999) for the empirical Epsilon* and outside the interval $(\delta, 1-\delta)$ for the parametric Epsilon*. 

We also inspect the impact of loss set sample size $n$ by varying it in the set $\{10^3, 10^4,  10^5\}$.  To evaluate the noise in our Epsilon* values,  we repeat the sampling (of size $n$) and computation of Epsilon* values $k$ times, where $k = 10$.

We investigate the relative values of the three Epsilon* implementations in two regimes: when the true training and non-training loss distributions are identical, implying that Epsilon* should have a value of 0;  and as the true loss distributions become farther apart. We simulate the latter case by considering the true training loss distribution to be $\Gamma(k_1, \theta_1)$,  and the true non-training loss distribution to be $\Gamma(k_1 + d, \theta_1)$, for $d \in \{1,2,3\}$, $k_1 = 2$, $\theta_1$ = 5. 

\subsubsection{Identical Loss Distributions}
A special case worth examining in our simulation is when the true training and non-training loss distributions are identical. This means that the true Epsilon* is 0, and therefore implementations that get closer to that value are more accurate.  

We show the values of Epsilon* (means and standard deviations over the 10 runs) computed with the empirical CDFs and the parametric fits to the loss distributions in Figure \ref{fig:sim_d0}.  We make two observations based on this figure: 1) The parametric implementations shows values closer to the true value of 0 than the empirical CDF implementation; and 2) there is an impact of the loss data set size: the larger the loss sample size, the closer both implementations get to the true value of 0.
\begin{figure}[h!]
  \centering
  \includegraphics[width=\linewidth]{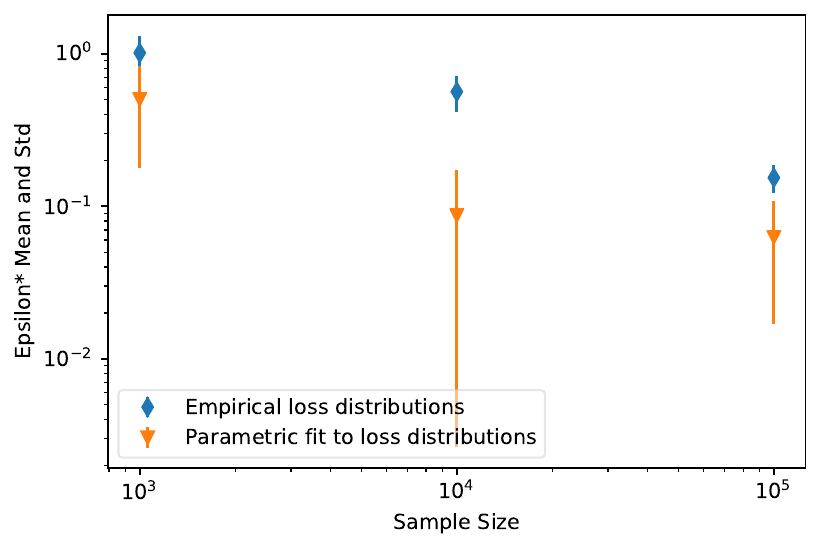}
  \caption{Epsilon* from empirical CDFs vs parametric fitted distributions to loss data when the true training and non-training distributions are identical}
  \label{fig:sim_d0}
\end{figure}

\subsubsection{Increasing the distance between training and non-training distributions}
We now explore in simulation what happens when the training and non-training loss distributions are farther and farther apart. We simulate this by increasing the parameter $d$, where the true training and non-training distributions are $\Gamma(k_1, \theta_1)$,  and  $\Gamma(k_1 + d, \theta_1)$ respectively,  for $d \in \{0,1,2,3\}$.  Since Figure \ref{fig:sim_d0} showed that the sample size can influence the accuracy of Epsilon*,  we also vary the training and non-training sample size $n$ in the set $\{10^3, 10^4, 10^5\}$.  

The mean and standard deviation of the Epsilon* values over 10 runs of the simulation are shown in Figure \ref{fig:vary_d_n}. This figure confirms the findings of Figure \ref{fig:sim_d0} when $d = 0$, meaning that the empirical CDF implementation overshoots the true value of 0 when $d = 0$ more than the parametric implementation,  but also shows that the empirical CDF implementation values undershoot the true Epsilon* values more than those of the parametric implementation as the distance $d$ between training and non-training loss distributions increases.  Just as in Figure \ref{fig:sim_d0}, this bias is exacerbated for smaller sample sizes but can be mitigated by increasing loss set sample sizes.  
\begin{figure}
  \centering
  \includegraphics[width=\linewidth]{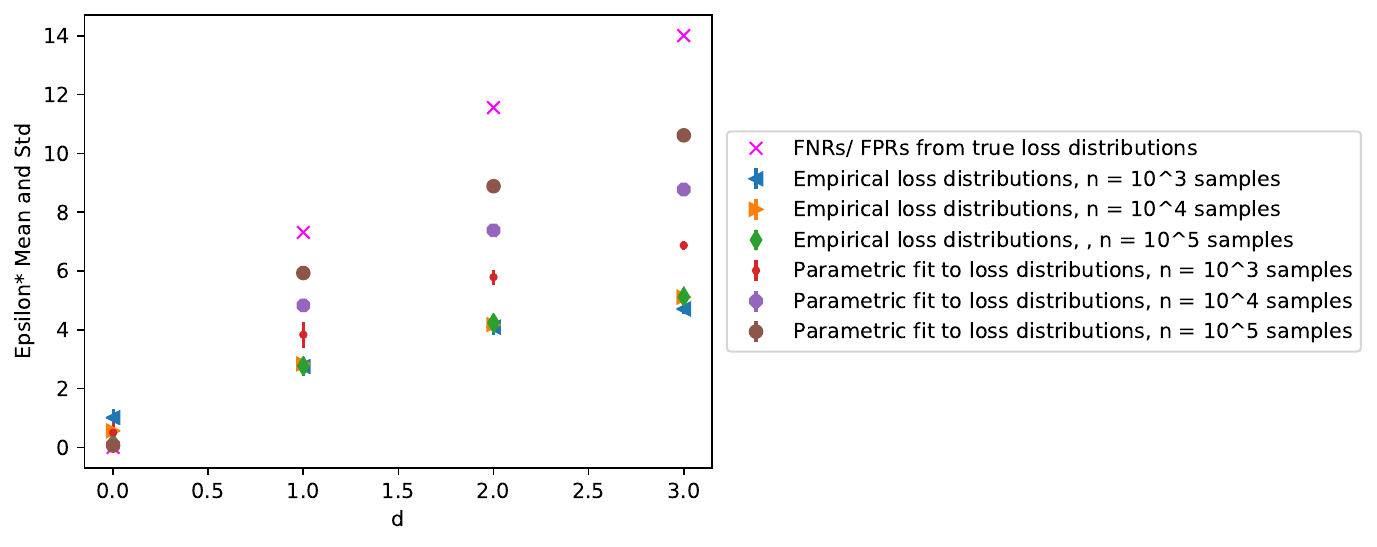}
  \caption{Epsilon* from empirical CDFs vs parametric fitted distributions to loss data when the true training and non-training distributions are $\Gamma(k_1, \theta_1)$,  and  $\Gamma(k_1 + d, \theta_1)$ respectively,  for $d \in \{0,1,2,3\}$,  $k_1 = 2$, $\theta_1$ = 5. }
  \label{fig:vary_d_n}
\end{figure}
We conclude from this simulation that fitting parametric distributions to loss data in order to estimate the FNRs and FPRs used in the Epsilon* computation leads to more accurate values than using empirical CDFs of the loss data sets.

\subsection{Goodness-of-fit for Parametric Distributions Fitted to Loss Data}
As described in Section \ref{subsec:simulation},  we fit Normal distributions to the transformed train and non-train losses in order to estimate the FNRs and FPRs required by the evaluation of Epsilon*.  In order to evaluate the quality of the parametric fitting,  we perform 2-sample Kolmogorov-Smirnov (KS) goodness-of-fit statistical tests which measure the maximum distance between the empirical CDFs of the two samples and tests the hypothesis that the underlying distributions are identical.  

We evaluate the goodness-of-fit test not only for a Normal distribution, but for more general Gaussian Mixture Models (GMMs) with up to 20 components.
To perform the KS test,  we:
\begin{enumerate}
\item Partition the  given loss set (be it training or non-training) into a fitting set and an evaluation set,  where the latter has size $n_{samples}$.
\item Fit a GMM with $n_{components}$ to the fitting set from Step 1.
\item Sample a set of $n_{samples}$ from the GMM fitted in Step 2.
\item Perform a 2-sample Kolmogorov-Smirnov (KS) test comparing the evaluation set to the set of samples generated by the GMM.
\end{enumerate}
We vary $n_{components}$  from  1 to 20, and $n_{samples}$ from 100 to 1,000. 

We are particularly interested in whether the p-value of the KS test goes above $\alpha$ =0.05 for any fitted GMM model, as that indicates that the null hypothesis (that the underlying distributions are identical) cannot be rejected at that confidence level.

\begin{figure*}[t!]
    \centering
    \begin{subfigure}[t]{0.5\textwidth}
        \centering
        \includegraphics[height = 2in]{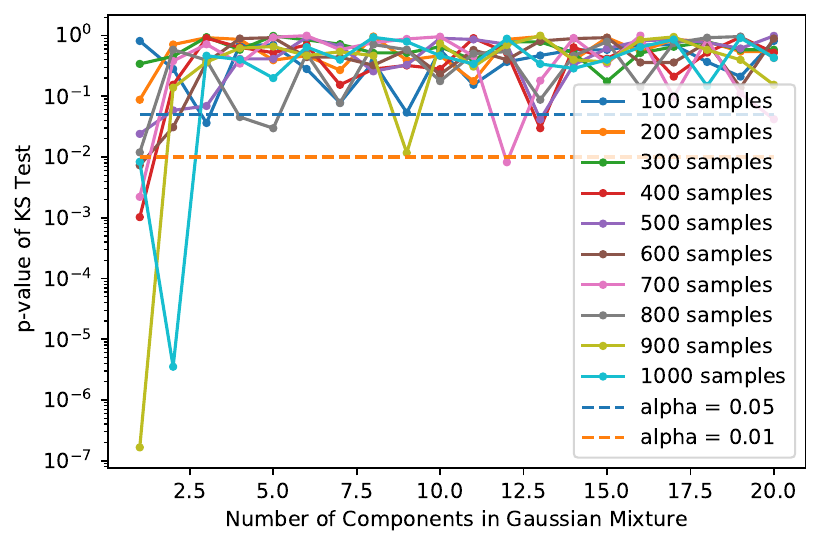}
        \caption{Model instance trained without DP-SGD on Adult data set}
    \end{subfigure}%
    ~ 
    \begin{subfigure}[t]{0.5\textwidth}
        \centering
        \includegraphics[height = 2in]{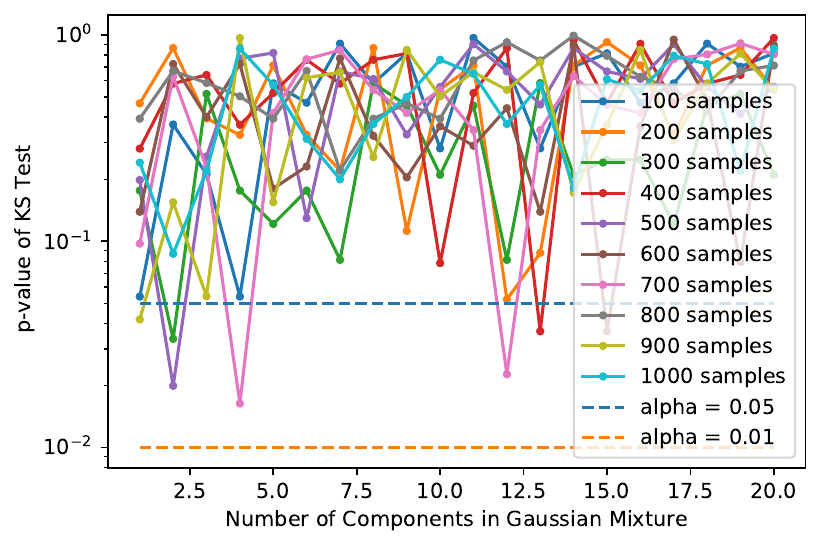}
        \caption{{\color{black}Model instance trained with DP-SGD on Adult data set}}
    \end{subfigure}
    \caption{p-values from KS tests of goodness-of-fit for a model instance of the Adult data set trained without (a) and with (b) DP-SGD}
    \label{fig:KS_adult_noDP_train}
\end{figure*}
Figure \ref{fig:KS_adult_noDP_train} shows the p-values of the KS test comparing GMMs of increasing number of components to a sample of the transformed training losses for instances of the Adult model trained with and without DP.  We find that most p-values go above 0.05, especially as the number of components in the GMM is increased, and when DP is used in training.  For most sample sizes ($n_{samples}$),  even only one component in the GMM results in p-values above 0.05,  especially when training with DP-SGD. Therefore,  in order to keep the complexity of the fitted distribution as low as possible, we decided to keep the one-component (Normal distribution) in our implementation. 

\end{document}